\documentclass[conference]{IEEEtran}

\usepackage{amsmath,amssymb,amsfonts, amsthm}
\usepackage{url,hyperref}
\usepackage{algorithmic}
\usepackage{float}
\usepackage{xspace}
\usepackage{tcolorbox}
\usepackage{graphicx}
\usepackage{textcomp}
\usepackage{standalone}
\usepackage{xcolor}
\usepackage{adjustbox}

\newtheorem{theorem}{Theorem}
\newtheorem{lemma}{Lemma}

\title{Herglotz-NET: Implicit Neural Representation of Spherical~Data with Harmonic Positional Encoding}

\author{Th\'eo Hanon$^1$, Nicolas Mil-Homens Cavaco$^1$, John Kiely$^2$, Laurent Jacques$^1$ \\[2mm]
\small $^1$: INMA/ICTEAM, ULouvain, Belgium; $^2$: Yale University, CT, USA.
\thanks{E-mails: \protect\url{theo.hanon@student.uclouvain.be}, \protect\url{{nicolas.mil-homens,laurent.jacques}@uclouvain.be}, \protect\url{john.kiely@yale.edu}. Computational resources have been provided by the supercomputing facilities of UCLouvain (CISM) and the Consortium des Equipements de Calcul Intensif en Fédération Wallonie Bruxelles (CECI) funded by the Fond de la Recherche Scientifique de Belgique (FRS-FNRS). Part of this work is funded by the Belgian F.R.S.-FNRS (PDR project QuadSense; T.0160.24).}
}
%%% Newcommands
\usepackage{commands}

\IEEEoverridecommandlockouts     
\begin{document}
\maketitle

\begin{abstract}
    Representing and processing data in spherical domains presents unique challenges, primarily due to the curvature of the domain, which complicates the application of classical Euclidean techniques. Implicit neural representations (INRs) have emerged as a promising alternative for high-fidelity data representation; however, to effectively handle spherical domains, these methods must be adapted to the inherent geometry of the sphere to maintain both accuracy and stability. In this context, we propose Herglotz-NET (HNET), a novel INR architecture that employs a harmonic positional encoding based on complex Herglotz mappings. This encoding yields a well-posed representation on the sphere with interpretable and robust spectral properties. Moreover, we present a unified expressivity analysis showing that any spherical-based INR satisfying a mild condition exhibits a predictable spectral expansion that scales with network depth. Our results establish HNET as a scalable and flexible framework for accurate modeling of spherical data.
\end{abstract}

\begin{IEEEkeywords}
Herglotz, spherical harmonics, spectral analysis, implicit neural representation.
\end{IEEEkeywords}

\section{Introduction}
Over the past few years, implicit neural representations (INRs), such as SIREN~\cite{sitzmann2020implicitneuralrepresentationsperiodic} and NeRF \cite{mildenhall2020nerfrepresentingscenesneural}, have attracted significant attention as a powerful alternative to classical, explicitly discretized representations of continuous signals. 
In these models, neural networks learn the mapping from spatial coordinates to signal values, yielding differentiable and memory-efficient representations. 
These architectures have found applications in areas such as continuous 3D scene reconstruction~\cite{mildenhall2020nerfrepresentingscenesneural}, data compression~\cite{dupont2021coin}, as well as in solving partial differential equations and various inverse problems~\cite{sitzmann2020implicitneuralrepresentationsperiodic,liu2022recovery}. 
Despite their success in Euclidean settings, extending INRs to spherical domains poses additional challenges as they are prone to generate artifacts, especially near the poles.  
To enhance the capabilities of INRs in representing signals on the sphere, \cite{russwurm2024locationencoding}~has proposed to incorporate spherical harmonics (SH) into the positional encoding (PE) of INRs (cf.~Sec.~\ref{sec:related_work}). 
However, their approach, SPH-SIREN, requires closed-form evaluations (or recursive computation) of a potentially large list of spherical harmonics, which can be computationally intensive and prone to numerical instability at high orders.

In this work, we propose Herglotz-NET (HNET), a new implicit neural representation for spherical data, as an alternative to SPH-SIREN~\cite{russwurm2024locationencoding} that does not require explicit evaluations of spherical harmonics (cf.~Sec.~\ref{sec:inr_S2}). 
In particular, we propose to replace the Euclidean periodic functions used in INRs by the exponentiation of an Herglotz mapping~\cite[Sec. VII.7]{courant62}, a \emph{harmonic} PE.
In Sec.~\ref{sec:spectrum_expansion}, we generalize part of the spectral analysis of Euclidean INR  in~\cite{yuce2022structureddictionaryperspectiveimplicit} to Spherical INRs, providing insights on the expressive power of both HNET and SPH-SIREN~\cite{russwurm2024locationencoding} in function of their depth. 
Finally, in Sec.~\ref{sec:numerical_experiments}, we demonstrate in two experiments---a super-resolution application and the estimation of a continuous spherical Laplacian map---that, despite the simplicity of its PE, HNET's performance is on a par with that of SPH-SIREN, both schemes being superior to non-spherical SIREN for the Laplacian estimation. 

\section{Related Work}
\label{sec:related_work}
Typically, an implicit neural representation (INR) $f_\Theta: \mathcal{E} \to \bb K$ approximates a function \( f : \mathcal{E} \to \bb K \) (with $\bb K = \bb R$ or $\bb C$, and \(\mathcal{E} \subset \mathbb{R}^d\)) from its sampling $\cl D = \{(\bs x_i, y_i = f(\bs x_i))\}_{i=1}^N$ on a set of $N$ points $\{\bs x_i\}_{i=1}^N \subset \cl E$. This is done by \emph{training} the set of parameters $\Theta$ on $\cl D$ so that $f_\Theta$ interpolates $f$, \ie $f_\Theta(\bs x_i) \approx y_i$. An INR combines a positional encoding (PE) composed of $n$ neurons
\begin{equation}
\label{eq:PE-Euclidean-INR}
\psi_{\sin}(\bs x) = \Big(\sin(\bs \omega_i^\top \bs x + b_i)\Big)_{i=1}^n,    
\end{equation}
which extracts $n$ Fourier features from the input coordinates \(\bs{x} \in \mathcal{E}\), with a multilayer perceptron (MLP) whose activations are typically ReLU or sine wave.  Frequencies $\bs \omega_i$ and bias $b_i$ in the PE can be either considered as trainable weights 
or generated randomly, leading then to the so-called Random Fourier Features~\cite{Rahimi2007RandomFF}.
This periodic positional encoding layer of the INR network is key for accurate representations of a signal~\cite{tancik_fourier_2020} as it enables the representation of high-frequency signal features. 
The final architecture of an INR reads
\begin{equation}
\label{eq:INR}
f_{\Theta}(\bs x) := \bs{W}^{(Q)} (\phi^{(Q-1)}\circ \cdots \circ \phi^{(1)})\left(\psi(\bs x)\right),
\end{equation}
with the PE $\psi = \psi_{\sin}$, $\phi^{(q)}(\cdot) := \sigma^{(q)} (\bs{W}^{(q)} \cdot + \bs{b}^{(q)})$, $q=1,\ldots,Q-1$, and where each layer \( q \) is characterized by a weight matrix \( \bs{W}^{(q)} \in \bb{R}^{d_{q} \times d_{q-1}} \), a bias vector \( \bs{b}^{(q)} \in \bb{R}^{d_q} \), and an activation function \( \sigma^{(q)}: \bb{R} \rightarrow \bb{R} \) applied element-wise. 

Moreover, these architectures benefit from a comprehensive understanding of their underlying spectral properties~\cite{yuce2022structureddictionaryperspectiveimplicit}, namely that the frequency support of the initial positional encoding entirely governs the expressive capacity of the model. 
In addition, assuming the activation function admits a Taylor series expansion, 
each layer in the network expands the frequency spectrum by the order of the activation function’s expansion. For networks employing sinusoidal activation functions (\eg SIRENs \cite{sitzmann2020implicitneuralrepresentationsperiodic}) this expansion is theoretically infinite due to the infinite Taylor series, while in practice most of the coefficients of the polynomial expansion decay at a factorial rate so that some terms become negligible up to some order.

As they do not consider the spherical geometry, conventional INRs, such as SIRENs, can generate singular spherical functions prone to sampling artifacts, particularly at the poles. In \cite{russwurm2024locationencoding}, the authors addressed these limitations by introducing SPH-SIREN; a SIREN network integrating spherical harmonics (SH) $Y_{\ell m}(\theta,\varphi)$ with order $\ell \geq 0$ and moment $|m|\leq \ell$ \cite{courant62} into its PE, \ie they set the PE $\psi = \psi_{\rm SH}$ in~\eqref{eq:INR} with
\begin{equation*}
\bs x(\theta, \varphi)\in \bb S^2 \mapsto \psi_{\rm SH}(\bs x) = \{ {Y}_{\ell m}(\theta, \varphi) : |m|\leq \ell\}_{\ell=0}^{L_0}.
\end{equation*}
Our investigation revealed that, for orders above $L_0>25$, accurate closed-form expressions of SH in SPH-SIREN become impractical, while possible recursive estimation methods require high-precision arithmetic, limiting scalability.
\section{An alternative spherical INR on \(\bb S^2\)}
\label{sec:inr_S2}

We propose an alternative spherical INR relying on the Herglotz formalism~\cite[Sec. VII.7]{courant62}. Specifically, given a complex vector $\bs a \in \bb C^3$ satisfying the Herglotz condition $\bs a^\top \bs a =0$ (\eg the vector $(1,\im,0)$) and any twice complex-differentiable function $f: \bb C \to \bb R$, the function $g(\bs x) := f(\bs a^\top \bs x)$, composing $f$ with the Herlgotz mapping $\bs x \mapsto \bs a^\top\bs x$, is a \emph{harmonic} function, \ie its Laplacian $\Delta$ vanishes everywhere, $\Delta g(\bs x) = 0$. 

Interestingly, when restricted on the 2-sphere $\bb S^2$, the function 
$\bs x \to \exp(\bs a^\top\bs x)$
is a generating function of all spherical harmonics \cite{courant62}. In fact, given some $\omega_0 > 0$ and defining $\bs a_{\Re}, \bs a_{\Im} \in \bb R^3$ the real and imaginary parts of $\bs a$, on $\bb S^2$, the function 
\begin{equation}
\label{eq:Herglotz_mapping}
g(\bs x) = \exp\big(\omega_0 (\bs a^\top\bs x)\big),\ \bs x \in \bb S^2,
\end{equation}
is centered on $\bs a_{\Re}/\|\bs a_{\Re}\|$ with oscillations locally pointing in the direction of $\bs a_{\Im}$, and a frequency increasing with $\omega_0$ (see Fig.~\ref{fig:atoms}). This can be seen by first noting that $\bs a_{\Re}^\top\bs a_{\Im} = 0$ from the Herlgotz condition. Moreover, since $\bs a^\top\bs x = \bs a_{\Re}^\top \bs x + \im \bs a_{\Im}^\top\bs x$, $g(\bs x) = |g(\bs x)| \exp( \im \omega_0 (\bs a_{\Im}^\top\bs x))$, with a maximal amplitude $|g(\bs x)|=\exp( \omega_0 (\bs a_{\Re}^\top \bs x))$ on $\bs x \propto \bs a_{\Re}$. 

\begin{figure}[!t]
    \centering
    \includegraphics[width=\linewidth]{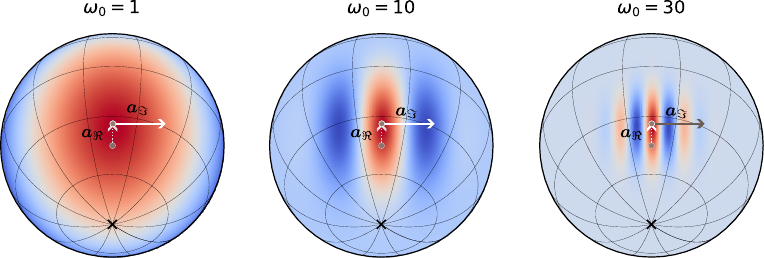}
    \vspace{-1.5em}
    \caption{Representation of the real part of the Herglotz atom $g(\bs x)$ for $\bs a = (1,1,1)/\sqrt{3} + \im (1,-1,0)/\sqrt{2}$, and different values of $\omega_0$. The function $g$ is centered on $\bs a_{\Re} = (1,1,1)/\sqrt{3}$ with oscillations locally oriented along the direction $\bs a_{\Im} = (1,-1,0)/\sqrt{2}$ and frequency proportional to $\omega_0$.}
    \vspace{-1em}
    \label{fig:atoms}
\end{figure}
\begin{figure}[!ht]
    \centering
    \includegraphics[width=.9\linewidth]{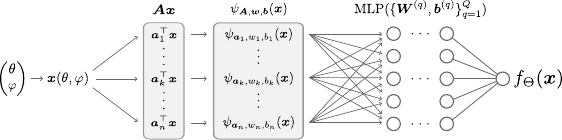}
    \caption{The Herglotz-NET architecture using the $n$-neuron PE defined by \eqref{eq:PE-hnet}.}
    \vspace{-1em}
    \label{fig:herglotz_layer}
\end{figure}

\noindent\textbf{Herglotz-NET}: From the above analysis, we propose an alternative INR for spherical data that preserves SIREN's flexibility and scalability while providing a well-defined framework on $\bb S^2$. Specifically, we define a Herglotz-NET (or HNET) by adapting the INR positional encoding $\psi$ in \eqref{eq:INR} using the mapping \eqref{eq:Herglotz_mapping}. Given $n$ Herglotz vectors $(\bs a_1, \ldots, \bs a_n)^\top =: \bs A \in \bb C^{n\times 3}$, with $\bs a_i^\top\bs a_i=0$ for all $i \in [n]$, $n$ trainable weights $(w_1, \ldots, w_n) =: \bs w$ and biases $(b_1, \ldots, b_n) =: \bs b$, the PE is composed of $n$ neurons 
\begin{equation}
\label{eq:PE-hnet}
\ts \psi_{\bs A, \bs w, \bs b}(\bs x) := \big\{\psi_{\bs a_i, w_i, b_i}(\bs x) = e^{\omega_0 (w_i \bs a_i^\top \bs x + b_i)}\big\}_{i=1}^n    
\end{equation}
encoding the location $\bs x(\theta,\varphi) \in \bb S^2$, with $(\theta,\varphi) \in [0,\pi] \times [0, 2\pi)$.   
Consequently, each neuron corresponds to a harmonic function if $\bs x \in \bb R^3$ if not restricted to $\bb S^2$. As for SIREN in~\eqref{eq:INR}, we add scaling factor $\omega_0 \in \bb R$ into the PE. This can drastically affect training---yielding faster convergence and enhanced stability when properly tuned. The PE is subsequently fed to an MLP with periodic activations $\sigma^{(q)}(\cdot) = \sin(\cdot)$ (see Fig.~\ref{fig:herglotz_layer}). 

\noindent\textbf{HNET initialization}: We initialize all the weights and biases of HNET using the scheme proposed in SIREN \cite{sitzmann2020implicitneuralrepresentationsperiodic}. Regarding the PE, we select $n$ \iid \emph{unit} random Herglotz vectors $\bs a_i \sim_{\iid} \bs a$, with $2\bs a_{\Re}$ picked uniformly at random on $\bb S^2$ (\ie $\|\bs a_{\Re}\|=1/2$) and $2\bs a_{\Im}$ picked uniformly at random on the circle $\bb S^1$ in the plane orthogonal to $2\bs a_{\Re}$ (with also $\|\bs a_{\Im}\|=1/2$). As one can show that the distribution of $\bs a^\top \bs x \in \bb C$ is unchanged if $\bs x \to \bs R \bs x$, with $\bs R \in SO(3)$ an arbitrary rotation, no neurons $e^{\omega_0 (w_i \bs a_i^\top \bs x)}$ favors, in expectation, a specific point of $\bb S^2$ or a specific direction of oscillation. 

\noindent\textbf{Spectral analysis}: One can easily understand the spectral properties of the function generated by each neuron of the PE, \ie its spectrum in the spherical harmonics basis $Y_{\ell m}(\theta,\varphi)$. Indeed, neglecting the biases, a Taylor series expansion gives  
\begin{equation}
    \label{eq:hk-SH-expansion}
\ts \exp\bigl(\omega_0 w_i (\bs a_i^\top \bs x)\bigr) = \sum_{\ell\ge 0} \frac{\omega_0^\ell}{\ell!} w_i^\ell (\bs a_i^\top \bs x)^\ell,
\end{equation}
where each term $(\bs a_i^\top \bs x)^\ell$ activates a unique spectral component at order $\ell$, as stated in Lemma \ref{lem:monomial_herglotz} below.
\begin{lemma}
\label{lem:monomial_herglotz}
Given a vector \(\bs a\in\mathbb{C}^3\), with \(\bs a^\top \bs a=0\) and \(\ell \in \bb N\) and $\|\bs a\|_2 = 1$, the function $h_\ell(\bs x)=(\bs a^\top \bs x)^\ell$ is harmonic in \(\bb R^3\) and its restriction to all $\bs x = \bs x(\theta,\varphi) \in \bb S^2$ reads
\begin{equation*}
\ts h_\ell(\bs x)=\sum_{|m|\leq \ell} c_{\ell m}(\bs a)\, Y_{\ell m}(\theta,\varphi),    
\end{equation*}
with $c_{\ell m}(\bs a)\in \bb C$ such that $\sum_{|m|\leq \ell} |c_{\ell m}|^2 \leq 4\pi (2\ell+1)$.
\end{lemma}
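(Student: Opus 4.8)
The plan is to treat the three assertions --- harmonicity, that the restriction lives purely at spherical order $\ell$, and the coefficient bound --- in that order. First I would verify harmonicity by a direct computation: writing $u(\bs x):=\bs a^\top\bs x=\sum_{j=1}^{3} a_j x_j$, one has $\partial_{x_j}u = a_j$, hence $\partial_{x_j}^2 h_\ell = \ell(\ell-1)\,u^{\ell-2}a_j^2$, and summing over $j$ gives $\Delta h_\ell = \ell(\ell-1)\,(\bs a^\top\bs x)^{\ell-2}\,(\bs a^\top\bs a)$, which vanishes identically \emph{because} of the Herglotz condition $\bs a^\top\bs a = 0$. Thus $h_\ell$ is a homogeneous polynomial of degree $\ell$ on $\bb R^3$ that is harmonic.

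Next I would identify $h_\ell|_{\bb S^2}$. Invoking the classical decomposition of homogeneous degree-$\ell$ polynomials, $\mathcal{P}_\ell = \mathcal{H}_\ell \oplus \|\bs x\|_2^2\,\mathcal{P}_{\ell-2}$, where $\mathcal{H}_\ell$ denotes the harmonic ones, together with the fact that the restriction map sends $\mathcal{H}_\ell$ onto exactly $\operatorname{span}\{Y_{\ell m}:|m|\le\ell\}$ \cite{courant62}, and since $h_\ell\in\mathcal{H}_\ell$ by the previous step, we obtain $h_\ell(\bs x) = \sum_{|m|\le\ell} c_{\ell m}(\bs a)\,Y_{\ell m}(\theta,\varphi)$ on $\bb S^2$ with no contribution from any order $\ell'\neq\ell$ (equivalently, one checks $\Delta_{\bb S^2}(h_\ell|_{\bb S^2}) = -\ell(\ell+1)\,h_\ell|_{\bb S^2}$, which pins the spherical order to $\ell$). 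The coefficients are then $c_{\ell m}(\bs a) = \int_{\bb S^2} h_\ell(\bs x)\,\overline{Y_{\ell m}(\bs x)}\,\mathrm{d}\sigma(\bs x)$ for the $L^2(\bb S^2)$ inner product making the $Y_{\ell m}$ orthonormal.

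For the norm estimate, I would first note that for $\bs x\in\bb S^2$, Cauchy--Schwarz in $\bb C^3$ gives $|h_\ell(\bs x)| = |\bs a^\top\bs x|^\ell \le \big(\|\bs a\|_2\,\|\bs x\|_2\big)^\ell = 1$. Parseval on $\bb S^2$ then gives $\sum_{|m|\le\ell}|c_{\ell m}|^2 = \|h_\ell\|_{L^2(\bb S^2)}^2 = \int_{\bb S^2}|\bs a^\top\bs x|^{2\ell}\,\mathrm{d}\sigma \le 4\pi$, which is in fact stronger than the stated $4\pi(2\ell+1)$; if one wishes to match the statement verbatim, the cruder per-coefficient bound $|c_{\ell m}| \le \|h_\ell\|_\infty\,\|Y_{\ell m}\|_{L^1(\bb S^2)} \le \sqrt{4\pi}\,\|h_\ell\|_\infty \le \sqrt{4\pi}$, summed over the $2\ell+1$ values of $m$, yields it directly.

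All of the above is routine; the single ingredient I would cite rather than reprove is the classical identification of degree-$\ell$ harmonic homogeneous polynomials with order-$\ell$ spherical harmonics. The only genuine (and mild) obstacle is bookkeeping: keeping the normalization conventions for the $Y_{\ell m}$ and for the surface measure $\mathrm{d}\sigma$ consistent with those used elsewhere in the paper, since the constant $4\pi$ --- and hence the slack hidden in the $2\ell+1$ factor --- depends on that choice.
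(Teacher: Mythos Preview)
Your proof is correct and follows essentially the same line as the paper: direct harmonicity from $\bs a^\top\bs a=0$, identification of $h_\ell|_{\bb S^2}$ as an eigenfunction of $\Delta_{\bb S^2}$ at eigenvalue $-\ell(\ell+1)$ (the paper uses precisely the spherical decomposition of $\Delta$ on $r^\ell h_\ell(\bs u)$ that you give parenthetically), and the pointwise bound $|h_\ell|\le 1$ combined with Parseval. You are also right that Parseval alone yields $\sum_{|m|\le\ell}|c_{\ell m}|^2\le 4\pi$, so the $(2\ell+1)$ factor in the statement is slack; the paper does not exploit that extra factor anywhere.
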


\begin{proof}
Given $\bs x = r \bs u \in \bb R^3$ with $r= \|\bs x\|$ and $\bs u \in \bb S^2$,  
since \(h_\ell\) is homogeneous of degree \(\ell\) (\ie \(h_\ell(\bs x)= r^\ell h_\ell(\bs u)\)), its Laplacian in $\bb R^3$ reads
\[
\ts \Delta h_\ell(\bs x)=r^{\ell-2} \big(\,\ell(\ell+1)h_\ell(\bs u)+\Delta_{\bb S^2}h_\ell(\bs u)\,\big).
\]
By definition of $\bs a$, \(h_\ell\) is harmonic (\ie \(\Delta h_\ell(\bs x)=0\)), and $\Delta_{\bb S^2}h_\ell(\bs u)=-\ell(\ell+1)h_\ell(\bs u)$, and \(h_\ell(\bs u)\) can thus be expanded in the basis \(\{Y_{\ell m}\}_{m=-\ell}^{\ell}\) of spherical harmonics of degree \(\ell\). Moreover, since $\|\bs a\| = 1$, $|h_\ell(\bs x)| \leq 1$ for $\bs x \in \bb S^2$, and $\sum_{|m|\leq \ell} |c_{\ell m}|^2 \leq 4\pi (2\ell+1)$ using Parseval's theorem~\cite{courant62}.
\end{proof}

From this lemma, we conclude that each neuron 
$\psi_{\bs a_i, w_i, b_i}(\bs x)$
in \eqref{eq:PE-hnet}, has bounded SH coefficients, \ie  
$|\hat{\psi}_{\ell m}| \leq \frac{\omega_0^\ell}{\ell!} w_i^{\ell} |c_{\ell m}| \|\bs a_i\|^\ell$, with $\hat{\psi}_{\ell m} := \scp{\psi_{\bs a_i, w_i, b_i}}{Y_{\ell m}}$. Since $\ell! \geq e(\ell/e)^\ell$, the spectrum of each neuron, computed with $S_\psi(\ell):=(2\ell +1)^{-1}\sum_{|m|\leq \ell} |\hat{\psi}_{\ell m}|^2$, is bounded by $S_\psi(\ell) \leq \bar S_\psi(\ell) := \frac{4\pi}{e^2} (\frac{e \omega_0 w_i \|\bs a_i\|}{\ell})^{2\ell}$, with $\bar S_\psi(\ell)$ reaching a maximum around $\ell \approx e \omega_0 w_i \|\bs a_i\|$, before to quickly decay to 0 when $\ell$ increases. 

This shows that each neuron spectrum is essentially bandlimited and characterized by a dominant order around $\ell \propto \omega_0 w_i \|\bs a_i\|$, which is consistent with the frequency of the oscillatory pattern in Fig.~\ref{fig:atoms}. The fixed $\omega_0\in\mathbb{R}$ and the trainable $w_i\in\mathbb{C}$ then allow the model to adapt itself to various spectrum configurations.

\begin{figure}[!t]
    \centering
    \includegraphics[width=\linewidth, trim={0 0cm 15.45cm 0}, clip]{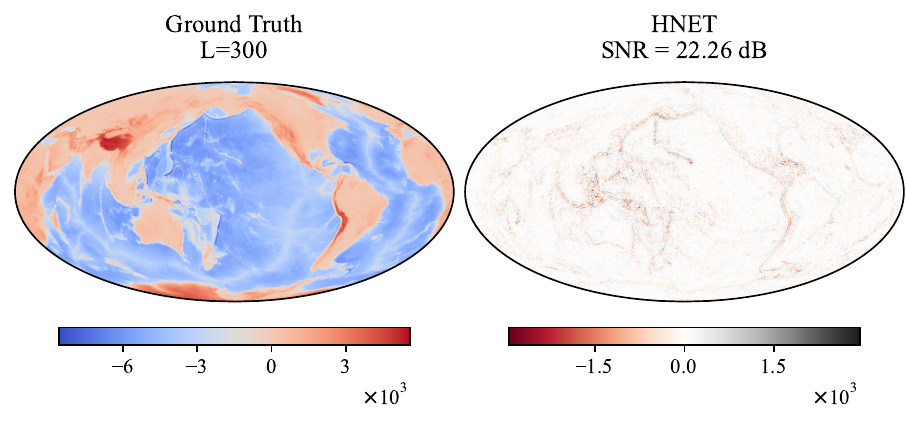}
    \vspace{-2.5em}
    \caption{(left) Earth representation on the fine grid ($L = 300$). (right)~Difference between this representation and HNET upsampled on the same grid.}
    \vspace{-1em}
    \label{fig:experience_high_resolution}
\end{figure}
 
\begin{figure*}[htbp] 
    \centering
    \includegraphics[width=.9\linewidth]{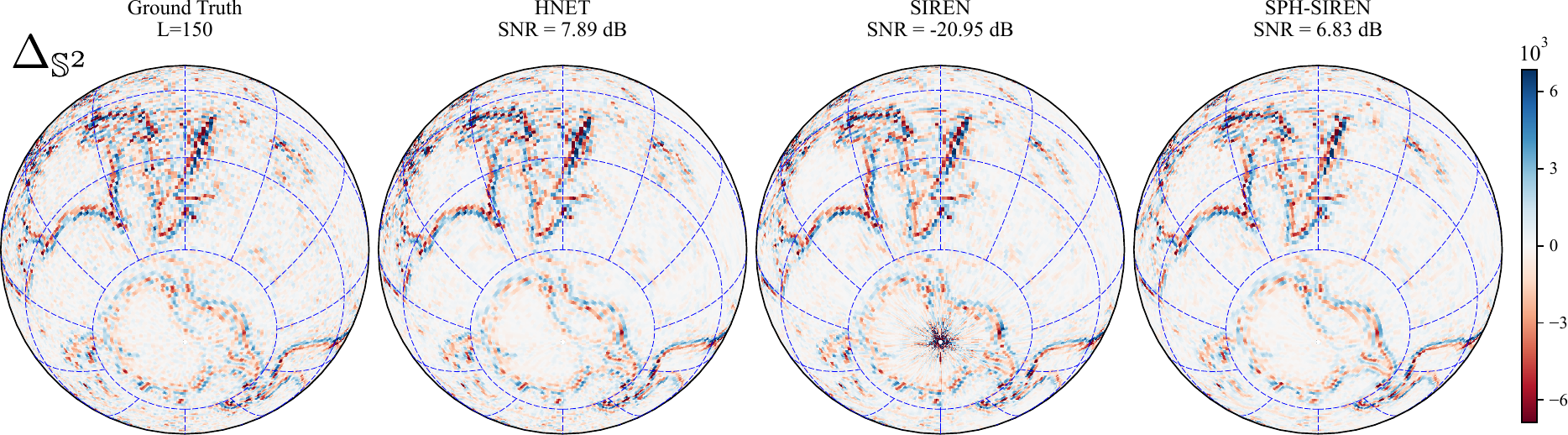}
    \caption{Comparison of the Laplacian of the Earth representation computed with spherical harmonics up to $L = 150$ (left) and the Laplacian of the models HNET, SIREN, and SPH-SIREN (from left to right). The ground truth Laplacian is computed using the spherical harmonic transform, while the model Laplacians are obtained via automatic differentiation of the continuous functions.}
    \vspace{-1em}
    \label{fig:experience_laplacian}
\end{figure*}

\section{Spectrum Expansion of Spherical INRs}
\label{sec:spectrum_expansion}
In this section, we build on the work of Yüce et al. \cite{yuce2022structureddictionaryperspectiveimplicit} by examining the spectral expansion of spherical-based INRs. Our goal is to shed light on the empirical performance of both HNET and SPH-SIREN~\cite{russwurm2024locationencoding} and to offer insights on the underlying properties of these models.

Our development relies on the key assumption that the HNET exponential functions in \eqref{eq:PE-hnet} and the subsequent MLP activations in both HNET and SPH-SIREN can be accurately approximated by truncated Taylor series. In particular, we approximate the exponential as $\exp(z) \approx \sum_{\ell=0}^{L_0} \beta_\ell z^\ell$, for a certain degree $L_0$, and represent the MLP activation functions as $\sigma(z) \approx \sum_{k=1}^{K} \alpha_k z^k$.
This assumption is motivated by the observation that many popular activation functions have rapidly decaying polynomial expansion coefficients—a property evident in $\exp(z)$ and, by extension, $\sin(z)$. 

This assumption allows us to reformulate the output of the first layer following the PE $\psi(\bs x)$ as a finite spectral expansion. In particular, for the \(j\)-th neuron we obtain
\begin{equation}
    \label{eq:poly_encoding}
\ts [\bs W^{(1)} \psi(\bs x) + \bs b^{(1)}]_j = \sum_{\ell=0}^{L_0}\sum_{|m| \leq \ell} c_{j\ell m}\, Y_{\ell m}(\theta,\varphi)
\end{equation}
for some $c_{j\ell m}\in \bb C$. Notably, Eq.~\eqref{eq:poly_encoding} is exact for SPH-SIREN, whereas it is only approximately fulfilled for HNET, given Lemma~\ref{lem:monomial_herglotz}. The following theorem formalizes the expansion property and delineates the expressivity limits of these architectures.
\begin{theorem}
    \label{theorem:limited_representation}
    Let $f_{\bs{\Theta}} : \bb S^2 \rightarrow \bb C$ be an INR  defined as in \eqref{eq:INR} with PE $\psi(\bs x)$ satisfying \eqref{eq:poly_encoding}, $Q \geq 1$ layers and polynomial activation functions $\sigma^{(q)}(z) = \sum_{k=0}^K \alpha_k z^k$. Then, there exist coefficients $\kappa_{\ell m} \in \bb C$ such that    
    \begin{equation}
        \ts f_{\bs{\Theta}}(\theta,\varphi) = \sum_{\ell=0}^{K^{Q-1} L_0}\sum_{|m| \leq \ell} \kappa_{\ell m} Y_{\ell m}(\theta,\varphi)
    \end{equation}
\end{theorem}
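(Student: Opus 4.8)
The plan is to track the spherical-harmonic bandwidth of the signal as it propagates through the network, relying on a single elementary fact: the restriction to $\bb S^2$ of a product $Y_{\ell_1 m_1}Y_{\ell_2 m_2}$ expands in $\{Y_{\ell m}\}$ with $|\ell_1-\ell_2|\le \ell \le \ell_1+\ell_2$. Indeed, $Y_{\ell m}|_{\bb S^2}$ is the restriction of a homogeneous harmonic polynomial of degree $\ell$ in $\bb R^3$, the product of two such polynomials is homogeneous of degree $\ell_1+\ell_2$, and every homogeneous polynomial of degree $d$ in three variables writes as $\sum_{j\ge 0} \|\bs x\|^{2j} h_{d-2j}(\bs x)$ with $h_{d-2j}$ homogeneous harmonic of degree $d-2j$; restricting to $\bb S^2$ makes $\|\bs x\|=1$, leaving a sum of spherical harmonics of degrees $\le d=\ell_1+\ell_2$ (equivalently one can invoke Gaunt / Clebsch--Gordan coefficients). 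Calling $g:\bb S^2\to\bb C$ \emph{$L$-bandlimited} when $g=\sum_{\ell=0}^{L}\sum_{|m|\le\ell} g_{\ell m}Y_{\ell m}$, linearity gives the two consequences I will use: (i) if $g$ is $L$-bandlimited and $h$ is $L'$-bandlimited then $gh$ is $(L+L')$-bandlimited; (ii) hence $g^k$ is $(kL)$-bandlimited for every $k\in\bb N$, with $g^0\equiv 1$ being $0$-bandlimited. Affine maps and additive constants preserve bandwidth, as they only combine functions linearly and add degree-$0$ terms.

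Next I would run an induction on the layer index. For $q\ge 1$ let $\bs z^{(q)}(\bs x):=\bs W^{(q)}(\phi^{(q-1)}\circ\cdots\circ\phi^{(1)})(\psi(\bs x))+\bs b^{(q)}$ be the pre-activation of layer $q$, the empty composition (case $q=1$) being the identity, so $\bs z^{(1)}=\bs W^{(1)}\psi(\bs x)+\bs b^{(1)}$; I claim every component of $\bs z^{(q)}$ is $K^{q-1}L_0$-bandlimited. The base case $q=1$ is exactly hypothesis \eqref{eq:poly_encoding}. For the step, assume the components of $\bs z^{(q)}$ are $K^{q-1}L_0$-bandlimited; applying $\phi^{(q)}$ componentwise gives $[\phi^{(q)}(\cdot)]_j=\sigma^{(q)}([\bs z^{(q)}]_j)=\sum_{k=0}^{K}\alpha_k\,([\bs z^{(q)}]_j)^k$, a linear combination of powers $\le K$ of a $K^{q-1}L_0$-bandlimited function, hence $K^{q}L_0$-bandlimited by (i)--(ii); then $\bs z^{(q+1)}=\bs W^{(q+1)}\sigma^{(q)}(\bs z^{(q)})+\bs b^{(q+1)}$ is a linear combination plus a constant of $K^{q}L_0$-bandlimited functions, so its components are $K^{q}L_0$-bandlimited, closing the induction (for $q=1,\dots,Q-1$).

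To conclude, for $Q\ge 2$ write $f_{\bs\Theta}(\bs x)=\bs W^{(Q)}\,\sigma^{(Q-1)}(\bs z^{(Q-1)})$; the induction makes $\sigma^{(Q-1)}(\bs z^{(Q-1)})$ an $K^{Q-1}L_0$-bandlimited vector, and the bandwidth-preserving linear map $\bs W^{(Q)}$ yields the stated expansion $f_{\bs\Theta}(\theta,\varphi)=\sum_{\ell=0}^{K^{Q-1}L_0}\sum_{|m|\le\ell}\kappa_{\ell m}Y_{\ell m}(\theta,\varphi)$. The degenerate case $Q=1$ is immediate from \eqref{eq:poly_encoding}, since $f_{\bs\Theta}=\bs W^{(1)}\psi(\bs x)$ differs from an $L_0=K^{0}L_0$-bandlimited function by the constant $\bs b^{(1)}$.

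The only non-routine ingredient is the spherical-harmonic product rule capping the degree of $Y_{\ell_1 m_1}Y_{\ell_2 m_2}$ at $\ell_1+\ell_2$; everything else is bookkeeping of degrees through affine maps and polynomial activations. I would stress that the theorem constrains only the \emph{support} of the spectrum, not the magnitudes $\kappa_{\ell m}$ — those can additionally be bounded (and typically decay fast) by combining the factorial estimate of Lemma~\ref{lem:monomial_herglotz} with the decay of the Taylor coefficients $\alpha_k$ (and $\beta_\ell$ for HNET), but that refinement is not required for the statement above.
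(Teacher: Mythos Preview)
Your proof is correct and follows essentially the same route as the paper: both establish that a degree-$K$ polynomial applied to an $L$-bandlimited spherical function is $KL$-bandlimited (via the spherical-harmonic product rule), then propagate this bound through the $Q-1$ activation layers by induction. Your argument is slightly more self-contained in that you justify the product rule through the homogeneous-harmonic decomposition of polynomials rather than citing the Clebsch--Gordan/Wigner contraction formula, and you handle biases inline rather than in a separate step, but these are presentational differences only.
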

\begin{proof}
    See Appendix.
\end{proof}
With Theorem \ref{theorem:limited_representation} in mind, we see that both HNET and SIREN rely on the frequency content of their positional encodings to determine their expressive power. In HNET, the learnable parameters $w_i\in \bb C$ enables the network to adaptively select from a wide range of frequencies, which makes the model capable of representing an unbounded spectrum. 

\section{Numerical experiments}
\label{sec:numerical_experiments}
In this section, we present a comprehensive evaluation of our proposed HNET and compare its performance with two baselines: a conventional SIREN~\cite{sitzmann2020implicitneuralrepresentationsperiodic} and its spherical variant SPH-SIREN~\cite{russwurm2024locationencoding}. Our experiments use Earth data from EARTH2014 representation available in PySHTools datasets.

\noindent \textbf{Data \& Preprocessing}:
The training data consists on the evaluation of the EARTH2014 spherical harmonic representation on a Gauss-Legendre (GL) grid~\cite{price:s2fft} with $L = 150$. For testing, a finer GL grid with $L=300$ is used. Prior to model training, the training data is standardized. 

\noindent \textbf{Model Architectures}:  
We compare three models. First, HNET with a PE based on 50 Herglotz mappings, followed by 3 hidden layers with 100 features each, and uses a scaling parameter \(\omega_0=10\) (\#param = 30601). Secondly, SIREN with a positional encoding with 100 neurons, 3 hidden layers with 100 features each, and a first layer scaling parameter \(\omega_0=10\) (\#param = 30700). Lastly, SPH-SIREN with a positional encoding containing the  spherical harmonics up to \(L=10\) (resulting in 110 neurons), 2 hidden layers with 100 features each and we incorporate a modified activation for the first hidden layer given by \(\sin(\omega_0 x)\) with \(\omega_0=3\) (\#param = 22401).

\noindent \textbf{Training}: Each model was trained for 2000 epochs with a batch size of 2048, using the Adam optimizer~\cite{Kingma2014AdamAM} with a fixed learning rate of $10^{-3}$.

\subsection{Super-Resolution}
This experiment is conducted in a super-resolution context. After training the model on a coarse Gauss-Legendre grid~\cite{price:s2fft} (with $L=150$), we evaluate its generalization capability on a finer grid ($L = 300$). Fig.~\ref{fig:experience_high_resolution} illustrates the residuals between the HNET output and the ground truth on the $L=300$ grid. Quantitavely, HNET, SIREN, SPH-SIREN achieve SNR values of 22.26 dB, 22.25dB and 22.29dB, respectively. Visual inspection reveals that the residual patterns of all three models are nearly identical. Overall, all the three architectures perform equivalently on this task.

\subsection{Laplacian Reconstruction}
In this experiment, we evaluate the ability of the models to accurately reconstruct the spherical Laplacian on the training grid (\ie $L = 150$); for consistency, we employ the same trained models from the previous experiment.
 
Notably, while the super-resolution results indicated that all models perform equivalently, the Laplacian reconstruction reveals distinct differences. As shown in Fig.~\ref{fig:experience_laplacian}, SIREN—although visually accurate in reconstructing the Laplacian away from the poles—suffers from significant artifacts near the poles that hinder its SNR. In contrast, both HNET and SPH-SIREN exhibit no visible artifacts at the poles and faithfully reconstruct the Laplacian.  

In summary,  although SIREN remains an effective fitting tool on spherical domains, it loses the well-posed differentiability that is crucial for applications such as PDE solving and inverse problems. In contrast, both HNET and SPH-SIREN maintain well-posed differentiability, making them better suited for such tasks. Our results demonstrate that HNET effectively combines SIREN's fitting speed and capacity with the well-posedness inherent to SPH-SIREN on spherical domain.
\section{Conclusion}
In this paper, we developed a novel implicit neural representation, HNET, adapted to  spherical data. By leveraging the Herglotz formalism to construct a harmonic positional encoding, HNET circumvents the need for explicit spherical harmonic evaluations. 
Our experiments have shown that HNET provides performances that are on par with the SPH-SIREN architecture that explicitly use spherical harmonics. Moreover, we demonstrated that, assuming a polynomial approximation of the activation functions, the spectral expressivity of both HNET and SPH-SIREN scales with the network depth, enabling the model to capture a rich (possibly) infinite set of frequencies on the sphere. 

\section*{Appendix : Proof of Theorem 1}
\noindent
The proof of Theorem \ref{theorem:limited_representation} requires the following lemma. We also use the handy notation $\sum_{(\ell,m)}^{[\![L']\!]} \equiv \sum_{\ell=0}^{L'}\sum_{|m|\leq \ell}$ (for $L' \in \bb N$).  
\begin{lemma}
    \label{lem:1}
    Given a complex value polynomial $p_K(z) = \sum_{k=1}^{K} a_k z^k$ of degree $K\geq 0$ and an order $L$, for any spherical function $f(\theta,\varphi) := \sum_{\ell=0}^L\sum_{|m|\leq \ell} c_{\ell m} Y_{\ell m}(\theta, \varphi)$ (with $c_{\ell m} \in \bb C$) with bandlimit $L$, $p_K \circ f$ has a bandlimit $KL$, \ie there exist coefficients $\kappa_{\ell m}\in \bb C$ such that 
    $$
        \ts p_K\big(f(\theta,\varphi)\big) = \sum_{\ell=0}^{KL}\sum_{|m|\leq \ell} \kappa_{\ell m} Y_{\ell m}(\theta, \varphi). 
    $$
\end{lemma}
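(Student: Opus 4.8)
The plan is to prove this by induction on the polynomial degree $K$, reducing everything to the single claim that the pointwise product of two bandlimited spherical functions is again bandlimited, with the bandlimits adding. Concretely, if $f$ has bandlimit $L$ and $g$ has bandlimit $L'$, then $fg$ has bandlimit $L+L'$. Granting this, the lemma follows immediately: $f^k$ has bandlimit $kL$ for each $k$ by iterating the product rule, and $p_K(f) = \sum_{k=1}^K a_k f^k$ is a finite linear combination of functions with bandlimits $L, 2L, \ldots, KL$, hence has bandlimit $KL$; the existence of the coefficients $\kappa_{\ell m}$ is then just the statement that a bandlimited function expands in the $Y_{\ell m}$ up to its bandlimit.

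The heart of the argument is therefore the product claim. First I would recall that the product $Y_{\ell_1 m_1} Y_{\ell_2 m_2}$ decomposes into spherical harmonics via the Clebsch--Gordan / Gaunt coefficients: it is a linear combination of $Y_{\ell m}$ with $|\ell_1 - \ell_2| \le \ell \le \ell_1 + \ell_2$ and $m = m_1 + m_2$. The key feature for us is only the upper bound $\ell \le \ell_1 + \ell_2$ on the orders appearing; the selection rule on $m$ and the precise values of the coefficients are irrelevant. This upper bound is most cleanly seen by the homogeneous-harmonic-polynomial viewpoint already used in Lemma~\ref{lem:monomial_herglotz}: the restriction of $Y_{\ell_i m_i}$ to $\bb S^2$ is the restriction of a homogeneous polynomial of degree $\ell_i$ in the ambient coordinates, so the product restricts a homogeneous polynomial of degree $\ell_1 + \ell_2$, which decomposes into solid harmonics of degrees $\ell_1+\ell_2, \ell_1+\ell_2-2, \ldots$ — all at most $\ell_1 + \ell_2$. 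Expanding $f = \sum_{(\ell_1,m_1)}^{[\![L]\!]} c_{\ell_1 m_1} Y_{\ell_1 m_1}$ and $g$ similarly, bilinearity of the product then gives $fg = \sum$ of terms each supported on orders $\le \ell_1 + \ell_2 \le L + L'$, which is the claim.

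The only genuine obstacle is justifying that the product of two $L^2(\bb S^2)$ functions given by their finite harmonic expansions is literally the function whose coefficients are obtained by the formal bilinear expansion above — but since both $f$ and $g$ are finite sums of smooth functions, their product is a smooth function, the double sum is finite, and the identity holds pointwise, so no convergence issue arises. I would therefore write the product-decomposition step as the main lemma (citing \cite{courant62} for the harmonic-polynomial decomposition), then state the induction on $k$ for $f^k$ in one line, and conclude with the linear combination $p_K(f) = \sum_k a_k f^k$. I expect the write-up to be short; the one place to be careful is to state explicitly that only the upper bound on the resulting order is used, so that the bookkeeping for $p_K$ goes through cleanly without tracking selection rules.
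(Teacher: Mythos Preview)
Your proposal is correct and follows essentially the same approach as the paper: induction on the polynomial degree, with the inductive step driven by the spherical-harmonic product rule (the paper cites the contraction rule with Wigner $3$-$j$ symbols, which is the same content as your Clebsch--Gordan/Gaunt observation) to show that $f^{K+1}=f^K\cdot f$ has bandlimit $(K+1)L$. Your additional justification of the order bound via the homogeneous-polynomial decomposition is a nice self-contained alternative to citing the contraction rule, but the overall structure matches the paper's proof.
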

\vspace{-4mm}
\begin{proof}
   By induction, we start by observing that $K=0$ is trivial since $Y_{0,0}(\theta,\varphi)$ is constant. Next, for general $K>0$, the proof only needs to consider  monomials of degree ${K+1}$ since $p_{K+1}(z) = a_{K+1} z^{K+1} + p_K(z)$. We can develop  
   \begin{align*}
   &\ts f(\theta, \varphi)^{K+1}\!= \sum_{(\ell,m)}^{[\![KL]\!]} \sum_{(\ell',m')}^{[\![L]\!]} \kappa_{\ell m} c_{\ell' m'} Y_{\ell m}(\theta,\varphi) Y_{\ell' m'}(\theta,\varphi)\\
   &\ts = \sum_{(\ell,m)}^{[\![KL]\!]} \sum_{(\ell',m')}^{[\![L]\!]}\kappa_{\ell m} c_{\ell' m'} \sum_{(k,p)}^{[\![\ell + \ell']\!]} \alpha_{kp} Y_{k p}(\theta, \varphi) \\
   &\ts = \sum_{(\ell, m)}^{[\![(K+1)L]\!]} {\hat \kappa}_{\ell m}  Y_{\ell m}(\theta, \varphi), 
    \end{align*}
    where the second line uses the contraction rule of SHs \cite[Sec.~5.8]{devanathan_angular_2002}, the coefficients $\alpha_{kp}\in \bb C$ involve Wigner 3-$j$ symbols, and $\hat \kappa_{\ell m} \in \bb C$.
\end{proof}
To prove Theorem~\ref{theorem:limited_representation}, we first consider a bias-free scenario ($\mathbf{b}^{(q)} = \mathbf{0}$ for all $q = 1, \dots, Q-1$). 
The $i$-th component of the pre-activation input of the second layer reads
   \begin{align*}
       &\ts {z}_i^{(2)} = 
        \sum_{j=1}^{d^{(2)}} W^{(2)}_{ij} \sigma^{(1)} \big([\bs W^{(1)} \psi(\bs x) + \bs b^{(1)}]_j\big) \\
       & \ts = \sum_{j=1}^{d^{(2)}} W^{(2)}_{ij} \sigma^{(1)}\big( \sum_{(\ell,m)}^{[\![L_0]\!]} c_{j \ell m} Y_{\ell m}(\theta,\varphi)\big) \\
       &\ts = \sum_{(\ell,m)}^{[\![KL_0]\!]} \big(\sum_{j=1}^{d^{(2)}} W^{(2)}_{ij}\kappa_{j\ell m}\big) Y_{\ell m}(\theta, \varphi)\\
       &\ts = \sum_{(\ell,m)}^{[\![KL_0]\!]} \hat \kappa_{i\ell m} Y_{\ell m}(\theta, \varphi), 
   \end{align*}
with $\hat \kappa_{i\ell m}\in \bb C$.
The second line comes from \eqref{eq:poly_encoding}, and the third one from Lemma~\ref{lem:1} for some $\kappa_{j\ell m} \in \bb C$.
The result of Theorem~\ref{theorem:limited_representation} follows by applying Lemma~\ref{lem:1} for each layer, that is, $(Q-1)$ times.
Adding biases shifts the constant term of the polynomial activation function. Consequently, biases do not alter the above reasoning, which concludes the proof.

\end{document}